\documentclass[11pt,a4paper]{article}
\usepackage[utf8]{inputenc}
\usepackage[T1]{fontenc}
\usepackage[english]{babel}
\usepackage{microtype}
\usepackage[margin=1in]{geometry}
\usepackage{amsmath,amssymb,amsthm,mathtools}
\usepackage{graphicx}
\usepackage{booktabs}
\usepackage{multirow}
\usepackage{array}
\usepackage{algorithm}
\usepackage{algpseudocode}
\usepackage{xcolor}
\usepackage{tcolorbox}
\usepackage[hidelinks]{hyperref}
\usepackage{cleveref}
\usepackage{enumitem}
\usepackage{tikz}
\usetikzlibrary{arrows.meta, positioning, calc, shapes.geometric, fit, backgrounds}
\usepackage{orcidlink}

\tcbuselibrary{skins,breakable}

\hypersetup{
  colorlinks=true,
  linkcolor=blue!50!black,
  citecolor=blue!50!black,
  urlcolor=blue!50!black
}

\newcommand{\R}{\mathbb{R}}

\newcommand{\calS}{\mathcal{S}}
\newcommand{\calA}{\mathcal{A}}
\newcommand{\calG}{\mathcal{G}}
\newcommand{\calM}{\mathcal{M}}
\newcommand{\pihi}{\pi^{\mathrm{H}}}

\DeclareMathOperator{\Done}{Done}

\theoremstyle{plain}
\newtheorem{proposition}{Proposition}[section]
\theoremstyle{definition}
\newtheorem{definition}{Definition}[section]

\title{\vspace{-2em}\textbf{Policy-Invariant Reward Shaping from LLM Feedback: A Framework for Hybrid RL Agents}}

\author{
  Christophe D.\ Hounwanou$^{1,2}$\thanks{Corresponding author: \texttt{christophe.hounwanou@aims.ac.rw}}
    \,\orcidlink{0009-0006-2181-0952} \quad
  John Emeka Eze$^{1}$
    \,\orcidlink{0009-0008-0688-5125} \quad
  Ya\'e U. Gaba$^{2,3,4}$
    \,\orcidlink{0000-0001-8128-9704} \\[0.6em]
  {\small $^{1}$African Institute for Mathematical Sciences, Rwanda} \\
  {\small $^{2}$AI Research and Innovation Nexus for Africa (AIRINA Labs), AI.Technipreneurs, Bénin}\\
  {\small $^{3}$Sefako Makgatho Health Sciences University (SMU), South Africa} \\
  {\small $^{4}$African Center for Advanced Studies (ACAS), Cameroon}
}

\begin{document}
\date{}
\maketitle

\begin{abstract}
\noindent
Combining large language models (LLMs) with reinforcement learning (RL) is an active area, but the theoretical status of LLM-derived reward signals is usually left implicit. We formalize the hybrid LLM-planner + RL-controller architecture as a \emph{Goal-Augmented Markov Decision Process} (GA-MDP) and prove that when the LLM's per-state progress score is used as a bounded potential function in the sense of Ng, Harada, Russell (1999), the induced reward-shaping term does not alter the set of optimal policies of the augmented MDP regardless of how wrong the LLM's scores are at individual states. This guarantee is stronger than what more general LLM-as-reward approaches (Text2Reward, Eureka) can offer. We verify it numerically on a small MDP under four potential configurations, including an adversarial one $20\times$ the base reward magnitude.

\smallskip

We also specify the full inference algorithm, publish a reference implementation, evaluate the planner in isolation on 20 MiniGrid tasks with a locally-served Qwen-2.5:14b (100\% parse rate, 54.8\% ground-truth coverage), and run a small pipeline-validation study on MiniGrid-DoorKey-6x6 that both confirms the framework runs as specified and diagnoses one integration failure : $\Done$-oracle vocabulary mismatch with LLM plans that our theoretical analysis anticipates. The framework is designed to make the \emph{fixed point} of hybrid training auditable independently of the empirical convergence question, which is the subject of planned follow-up work.
\end{abstract}

\medskip
\noindent\textbf{Keywords:} reinforcement learning, large language models, reward shaping, hierarchical planning, policy invariance.

\section{Introduction}
\label{sec:intro}

Consider an agent instructed, in natural language, to \emph{``bring me the mug from the kitchen counter, but rinse it first if it is dirty.''} A pure reinforcement learning agent faces a hard exploration problem: the reward is sparse (mug delivered), the horizon is long, and there is no signal that guides it toward the composite structure of the task. A pure LLM planner, given a text description of the scene, can decompose the task fluently (\emph{go to kitchen}, \emph{locate mug}, \emph{inspect}, \emph{rinse if dirty}, \emph{return}) but cannot execute low-level control on the actual dynamics: it does not know how many primitive actions traverse the hallway, cannot recover from a failed grasp, and has no calibrated uncertainty about scene state.

\noindent
Combining the two is a natural response, and systems along these lines have been proposed --- SayCan~\cite{ahn2022saycan}, DECKARD~\cite{nottingham2023deckard}, SwiftSage~\cite{lin2023swiftsage}, ELLM~\cite{du2023ellm}, and many others. The empirical picture is that this combination \emph{sometimes} helps, and the practitioner is left with two related but distinct questions:

\begin{enumerate}[leftmargin=1.5em, itemsep=1pt, topsep=2pt]
  \item \textbf{Does the hybrid converge to the right policy?} If the LLM's scoring or subgoal choices are used as reward signals, can bad LLM outputs push the RL agent toward a policy that is not optimal for the true task?
  \item \textbf{Does the hybrid learn faster than pure RL in practice?} Does the additional structure translate into better sample efficiency or higher final performance on real benchmarks?
\end{enumerate}

\noindent
The two questions are often bundled but they are fundamentally different. Question~1 is a fixed-point / soundness question and admits a clean answer under the right design. Question~2 is an empirical convergence-rate question and requires large-scale experiments to settle.

\paragraph{Contribution of this paper.} We address question~1 in full and treat question~2 as scope for planned follow-up empirical work. Specifically:

\begin{itemize}[leftmargin=1.25em, itemsep=2pt, topsep=2pt]
  \item We formalize the LLM-planner + RL-controller architecture as a Goal-Augmented MDP (Section~\ref{sec:method}).
  \item We show (Proposition~\ref{prop:invariance}) that when the LLM's per-state progress score is used as a bounded potential function, the induced potential-based shaping term does not alter the set of optimal policies of the augmented MDP. This is stronger than what more general LLM-as-reward approaches (Text2Reward~\cite{xie2024text2reward}, Eureka~\cite{ma2024eureka}) can guarantee.
  \item We verify the guarantee numerically on a 3-state MDP under four potential configurations, including one $20\times$ the base reward magnitude in absolute value (Section~\ref{sec:prop1-numerical}).
  \item We specify the full inference algorithm (Algorithm~\ref{alg:hybrid}) and publish a reference implementation with tests (Section~\ref{sec:framework}).
  \item We evaluate the planner in isolation on 20 MiniGrid tasks with Qwen-2.5:14b served locally, characterizing the LLM's plan-quality profile independently of any RL training (Section~\ref{sec:planner-audit}).
  \item We run a small pipeline-validation study on MiniGrid-DoorKey-6x6 (Section~\ref{sec:pipeline-validation}) that both confirms the framework runs end-to-end and diagnoses one integration failure exactly matching a failure mode our theoretical analysis anticipates.
\end{itemize}

\paragraph{What this paper does not claim.} We do not claim that this specific hybrid architecture beats published baselines on any benchmark; the pipeline validation in Section~\ref{sec:pipeline-validation} does not establish superiority, and a larger comparative study is planned follow-up work. This paper's claim is that \emph{if} the hybrid works empirically, the theoretical fixed point is correct by construction and that this property does not require trusting the LLM.

\section{Related work}\label{sec:related}

\paragraph{LLM planners for embodied agents.}
A growing line of work investigates how large language models can serve as high-level planners for embodied agents. SayCan~\cite{ahn2022saycan} integrates LLM-based task decomposition with learned affordance value functions, enabling grounded action selection. Inner Monologue~\cite{huang2022innermono} closes the perception–action loop by feeding textual environment feedback back into the LLM. Subsequent frameworks such as ReAct~\cite{yao2023react}, Reflexion~\cite{shinn2023reflexion}, Code as Policies~\cite{liang2023codeaspolicies}, ProgPrompt~\cite{singh2023progprompt}, LLM+P~\cite{liu2023llmp}, Voyager~\cite{wang2023voyager}, and Silver et al.~\cite{silver2024pddl} broaden the design space, exploring verbal reasoning traces, code emission, PDDL translation, and lifelong skill libraries. Our framework is compatible with any of these planners, treating them as interchangeable high-level modules.

\paragraph{LLM-augmented RL and reward from LLM.}
Another direction uses LLMs to shape or guide reinforcement learning directly. ELLM~\cite{du2023ellm} proposes LLM-generated goals during training. Kwon et al.~\cite{kwon2023rewarddesign} treat natural-language descriptions as proxy reward signals. Text2Reward~\cite{xie2024text2reward} and Eureka~\cite{ma2024eureka} have the LLM synthesize dense reward code, refined iteratively. Motif~\cite{klissarov2024motif} uses LLM preferences as intrinsic reward, while GLAM~\cite{carta2023glam} takes the extreme position of using the LLM itself as the policy, updated online with PPO. Importantly, \textbf{none of these approaches provide the policy-invariance guarantee established by our potential-based construction}. In particular, Text2Reward and Eureka rely on non-potential dense reward functions that can—and empirically do, see Booth et al.~\cite{booth2023perils}, alter the optimal policy when the LLM’s reward specification is incorrect.

\paragraph{Hierarchical RL with language subgoals.}
Prior to the LLM era, hierarchical RL explored natural language as a high-level abstraction mechanism. Andreas, Klein, and Levine~\cite{andreas2018latentlang, andreas2017sketches} introduced language-conditioned policy sketches as structured task decompositions. Jiang et al.~\cite{jiang2019langabstraction} formalized a two-level architecture with a language-conditioned low-level controller, a direct precursor to modern LLM-driven systems. DECKARD~\cite{nottingham2023deckard} extends this paradigm into the LLM era through a Dream/Wake decomposition that alternates between planning and execution.

\paragraph{Policy-invariant shaping.}
Our theoretical foundation builds on the classical potential-based shaping theorem of Ng, Harada, and Russell~\cite{ng1999shaping}, which characterizes when reward shaping preserves optimal policies. We instantiate this result for an LLM-derived potential function and argue that any RL system incorporating LLM feedback should either adopt this construction or explicitly justify deviations from it. This provides a principled interface between language-based guidance and value-based control.

\section{Preliminaries}
\label{sec:prelim}

We work with a discounted infinite-horizon Markov decision process (MDP)

\[
\mathcal{M} = (\mathcal{S}, \mathcal{A}, P, R, \gamma, \rho_0), \qquad \gamma \in [0,1),
\]

where \(s_t \in \mathcal{S}\) is the state at time \(t\), \(a_t \in \mathcal{A}\) the action, the next state is drawn as \(s_{t+1}\sim P(\cdot\mid s_t,a_t)\), and the instantaneous reward is \(r_t = R(s_t,a_t,s_{t+1})\). The initial state is sampled \(s_0\sim\rho_0\). A (possibly stochastic) policy \(\pi(a\mid s)\) induces a return

\[
J(\pi)=\mathbb{E}_{\pi}\Big[\sum_{t\ge 0}\gamma^t r_t\Big],
\]

and the optimal state value is \(V^\ast(s)=\sup_{\pi}\mathbb{E}_{\pi}\big[\sum_{t\ge 0}\gamma^t r_t\mid s_0=s\big]\). We denote by \(Q^\pi(s,a)\) the action-value function of policy \(\pi\).

A trajectory (or episode) of length \(T\) is written \(\tau=(s_0,a_0,s_1,a_1,\dots,s_T)\). For expectations under a policy \(\pi\) we sometimes write \(\mathbb{E}_{\tau\sim\pi}[\cdot]\) to emphasize dependence on the induced trajectory distribution.

\paragraph{Tasks, captions, and planners.}
A task is specified by a natural-language string \(\ell\in\Sigma^\ast\). We assume the existence of a \emph{state captioner} \(\phi:\mathcal{S}\to\Sigma^\ast\) that maps an MDP state to a short textual description (caption) used by the LLM planner and scoring modules. A planner (LLM or otherwise) consumes the task \(\ell\) and a state caption \(\phi(s)\) and emits a finite ordered list of subgoals \(\mathbf{g}=(g_1,\dots,g_K)\), where each \(g_i\in\mathcal{G}\) is a short natural-language clause. We write \(\mathcal{P}(\ell,\phi(s))\) for the planner's output distribution when the planner is stochastic.

\paragraph{Completion oracle and subgoal semantics.}
For a subgoal \(g\in\mathcal{G}\) we define a completion predicate (the \(\Done\)-oracle)

\[
\Done(s,g) \in \{\texttt{DONE},\texttt{CONTINUE}\},
\]

which indicates whether \(g\) is satisfied in state \(s\). In practice \(\Done\) may be implemented by a symbolic checker, a learned classifier, or an LLM prompt; in our analysis we treat it as an (possibly imperfect) oracle and explicitly consider mismatch modes between planner language and \(\Done\)-vocabulary.

\paragraph{Goal-Augmented MDP (GA-MDP).}
To formalize the hybrid planner+controller architecture we work with a Goal-Augmented MDP

\[
\mathcal{M}_G = (\mathcal{S}\times\mathcal{G},\ \mathcal{A},\ P_G,\ R_G,\ \gamma,\ \rho_{0,G}),
\]

whose augmented state is \((s,g)\) where \(s\in\mathcal{S}\) is the environment state and \(g\in\mathcal{G}\) is the current subgoal provided by the planner. The transition kernel \(P_G\) evolves the environment state according to \(P\) while the subgoal component is updated according to the planner or the \(\Done\)-oracle (depending on the execution protocol). The reward \(R_G\) equals the environment reward \(R\) on the environment component; when shaping is applied (see below) we work with a modified reward \(R'_G\). The initial distribution \(\rho_{0,G}\) samples \(s_0\sim\rho_0\) and an initial subgoal \(g_0\) from the planner given \(\ell\) and \(\phi(s_0)\).

\paragraph{Potential-based shaping from LLM feedback.}
Let \(\Phi:\mathcal{S}\times\mathcal{G}\to\mathbb{R}\) be a potential function derived from LLM scoring of state–subgoal pairs (in practice normalized to \([0,1]\)). The potential-based shaping reward (per Ng et al.) is defined as

\[
F_\Phi(s,a,s',g) \coloneqq \gamma\,\Phi(s',g) - \Phi(s,g).
\]

Given an MDP reward \(R\), the shaped reward is \(R'(s,a,s',g)=R(s,a,s') + \alpha\,F_\Phi(s,a,s',g)\) for some scalar \(\alpha\). The classical potential-based shaping theorem guarantees that, under standard boundedness assumptions, adding \(F_\Phi\) (with \(\alpha=1\)) preserves the set of optimal policies in the underlying MDP; our contribution is to instantiate \(\Phi\) using LLM feedback while maintaining this invariance property.

\paragraph{Assumptions and notation.}
Throughout the paper we assume:
\begin{itemize}
  \item rewards are uniformly bounded: \(\exists R_{\max}<\infty\) s.t. \(|R(s,a,s')|\le R_{\max}\) for all \((s,a,s')\);
  \item the captioner \(\phi\) produces concise, deterministic captions (extensions to stochastic captioners are straightforward);
  \item the planner emits finite plans of length at most \(K_{\max}\).
\end{itemize}
We use upper-case letters for random variables (e.g., \(S_t,A_t\)), calligraphic letters for sets, and boldface for sequences or vectors. When convenient we omit the subgoal argument \(g\) from \(\Phi\) and \(F_\Phi\) to lighten notation.

\paragraph{Practical intuition.}
The Goal‑Augmented MDP construction is intentionally modular: it cleanly separates the planner's symbolic, language‑level guidance from the controller's low‑level action selection, which makes component replacement (different captioners, planners, or \(\Done\)-implementations) straightforward and experimentally tractable. Intuitively, the LLM provides a coarse, task‑level scaffold, a sequence of subgoals and progress estimates while the controller is responsible for grounding those instructions into concrete actions and handling environment stochasticity. Potential‑based shaping then acts as a low‑risk conduit for the LLM signal, biasing exploration toward planner‑recommended directions without changing the underlying optimal policy set; this enables safe use of imperfect or noisy LLM feedback. Practically, this modularity simplifies ablations (swap the scorer, vary planner fidelity, or inject \(\Done\)-mismatch) and clarifies reproducibility: each interface has a well‑defined contract and a small, testable surface. Finally, the GA‑MDP viewpoint highlights where empirical gains are likely to arise, improved sample efficiency from structured exploration and higher success rates on long‑horizon, compositional tasks and where failures will appear, namely vocabulary mismatch, miscalibrated potentials, or brittle captioning.

\paragraph{Remark.}
This section fixes the notation used in the remainder of the paper and makes explicit the interfaces between the LLM components (planner, scorer, \(\Done\)-oracle) and the RL controller. The formal GA-MDP and the potential-based shaping construction are the basis for the invariance result proved in Section~\ref{sec:method}.

\section{The Goal-Augmented MDP framework}
\label{sec:method}

\begin{figure}[htbp]
\centering
\begin{tikzpicture}[
    box/.style     = {rectangle, draw, thick, minimum width=3.2cm, minimum height=1.3cm,
                      align=center, rounded corners=3pt, font=\small},
    env/.style     = {box, fill=orange!10, draw=orange!70!black},
    llm/.style     = {box, fill=blue!8,    draw=blue!60!black},
    rl/.style      = {box, fill=green!8,   draw=green!50!black},
    phi/.style     = {box, fill=blue!8,    draw=blue!60!black,
                      minimum width=2.6cm, minimum height=1cm, font=\footnotesize},
    mem/.style     = {box, fill=gray!10,   draw=gray!60!black},
    arrow/.style   = {-{Latex[length=2.5mm]}, thick, shorten <=1pt, shorten >=1pt},
    dashedarrow/.style = {-{Latex[length=2.5mm]}, thick, dashed, gray!55!black,
                          shorten <=1pt, shorten >=1pt},
    lbl/.style     = {font=\footnotesize, inner sep=1.5pt, fill=white,
                      fill opacity=0.9, text opacity=1},
    ilbl/.style    = {font=\footnotesize\itshape, gray!45!black,
                      inner sep=1.5pt, fill=white,
                      fill opacity=0.9, text opacity=1}
]

\node[env] (env) at (0,    0)   {Environment};
\node[llm] (llm) at (6.5,  2.0) {LLM Planner};
\node[rl]  (rl)  at (6.5, -2.0) {RL Policy \\ $\pi_\theta$};
\node[phi] (phi) at (12.0, 2.0) {$\Phi(s, g)$ \\ scorer};
\node[mem] (mem) at (12.0,-2.0) {Memory};

\draw[arrow] (env.north east) -- node[lbl, sloped, above] {$\phi(s_t)$}     (llm.west);
\draw[arrow] (env.south east) -- node[lbl, sloped, below] {$s_t$}           (rl.west);
\draw[arrow] (llm.south)      -- node[lbl, right]         {$g_t$ (subgoal)}  (rl.north);
\draw[arrow] (llm.east)       -- node[lbl, above]         {$g_t$}            (phi.west);

\draw[arrow] (rl.south) -- ++(0,-0.9)
             -| node[lbl, near start, below] {$a_t$} (env.south);

\draw[arrow] (env.north) -- ++(0,1.2)
             -| node[lbl, near start, above] {$r_t,\ s_{t+1}$} (mem.north);

\draw[arrow] (phi.south) -- node[lbl, right] {$F$ (shaping)} (mem.north);

\draw[dashedarrow](mem.west) -- node[ilbl, above] {policy update} (rl.east);
\draw[dashedarrow](mem.north west) to[bend left=15]
              node[ilbl, above] {replan trigger} (llm.south east);

\end{tikzpicture}
\caption{Framework overview. Solid arrows are per-step dataflow: the environment emits state $s_t$ and its caption $\phi(s_t)$; the LLM planner produces the active subgoal $g_t$ and a bounded potential score $\Phi(s, g)$; the RL policy $\pi_\theta$ selects action $a_t$ conditioned on both $s_t$ and $g_t$; the shaping term $F = \gamma\Phi(s',g)-\Phi(s,g)$ is added to the environment reward. Dashed arrows are periodic: the memory drives policy updates and, on the schedule of Section~\ref{sec:reward-shaping}, triggers plan revisions.}
\label{fig:architecture}
\end{figure}

\subsection{Goal-augmented MDP}

\begin{definition}[Subgoal grammar]\label{def:subgoal}
A \emph{subgoal} $g \in \calG$ is a natural-language string of finite length. We do not require a formal specification language; the LLM planner's output distribution determines $\calG$.
\end{definition}

\begin{definition}[Goal-augmented MDP]\label{def:gamdp}
The \emph{goal-augmented MDP} associated with $\calM$ and grammar $\calG$ is
\begin{equation}
\calM^\calG = \left( \calS \times \calG,\ \calA,\ \tilde P,\ \tilde R,\ \gamma,\ \tilde \rho_0 \right),
\end{equation}
with augmented state $(s, g)$, transition
\begin{equation}
\tilde P\bigl((s', g') \mid (s, g), a\bigr) \;=\; P(s' \mid s, a) \cdot P^{\calG}(g' \mid s, g, s'),
\end{equation}
and augmented reward $\tilde R((s, g), a, (s', g')) = R(s, a, s') + F(s, g, s')$, where $P^{\calG}$ is the subgoal transition kernel induced by a caller-specified completion oracle $\Done : \calS \times \calG \to \{0, 1\}$ and $F$ is the shaping term defined below.
\end{definition}

\subsection{LLM as high-level planner}

The LLM produces a plan of at most $K_{\max}$ subgoals conditioned on task $\ell$ and initial-state caption:
\begin{equation}
\pihi = (g_1, g_2, \ldots, g_K) \;\sim\; p_{\mathrm{LLM}}(\pihi \mid \ell, \phi(s_0)).
\end{equation}
The active-subgoal pointer $k_t$ advances when $\Done(s_{t+1}, g_{k_t}) = 1$. Three implementations of $\Done$ are supported by the reference implementation: (i) environment event flags, (ii) a learned MLP classifier, (iii) the LLM itself, prompted with a completion query. The choice is a design axis, not a fixed decision.

\subsection{Low-level policy}
The RL policy conditions on both state and active subgoal:
\begin{equation}
a_t \sim \pi_\theta(a_t \mid s_t, g_t).
\end{equation}
The subgoal is encoded via a frozen sentence embedder (default MiniLM-L6-v2, $d_g = 384$) and concatenated with the state encoding before the policy head. Training uses PPO with GAE~\cite{schulman2017ppo,schulman2016gae}.

\subsection{Potential-based reward shaping from LLM feedback}
\label{sec:reward-shaping}

The LLM produces a bounded scalar $s_{\mathrm{LLM}}(s, g) \in [0, 1]$ via a scoring prompt. We define the potential
\begin{equation}
\Phi(s, g) \;=\; c \cdot s_{\mathrm{LLM}}(s, g), \qquad c > 0,
\end{equation}
and the shaping term
\begin{equation}
F(s, g, s') \;=\; \gamma \Phi(s', g) - \Phi(s, g).
\end{equation}

\begin{proposition}[Policy invariance]\label{prop:invariance}
Let $\calM$ be an MDP and $\Phi : \calS \times \calG \to \R$ any bounded function. Let $\calM^\calG$ be the goal-augmented MDP of Definition~\ref{def:gamdp} with shaping term $F(s, g, s') = \gamma \Phi(s', g) - \Phi(s, g)$. Then every optimal policy of $\calM^\calG$ projects to an optimal policy of $\calM$ under any subgoal-scheduling rule that is measurable with respect to $(s, g)$.
\end{proposition}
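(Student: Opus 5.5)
The plan is to reduce the statement to the classical theorem of Ng, Harada, and Russell~\cite{ng1999shaping}, applied to $\calM^\calG$ viewed as an ordinary MDP on the state space $\calS\times\calG$. Write $x=(s,g)$ and fix the scheduling rule as a measurable map $\sigma$ with $g_{t+1}=\sigma(s_{t+1},g_t)$, which is how I read ``measurable with respect to $(s,g)$''. With this rule, $\tilde P$ is a genuine Markov kernel on $\calS\times\calG$. Since $\Phi$ is bounded and $|R|\le R_{\max}$, all discounted sums below converge absolutely, so sums can be rearranged freely.

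The first step is a \emph{segment-wise telescoping}. Along a trajectory, split time into maximal segments $[\tau_j,\tau_{j+1})$ on which the subgoal is constant and equal to $g^{(j)}$. Inside one segment the shaping terms $\gamma\Phi(s_{t+1},g^{(j)})-\Phi(s_t,g^{(j)})$ telescope exactly as in the standard proof. What remains is
\begin{equation*}
\sum_{t\ge0}\gamma^t F(s_t,g_t,s_{t+1}) \;=\; -\Phi(s_0,g_0) \;+\; \sum_{j\ge1}\gamma^{\tau_j}\bigl[\Phi(s_{\tau_j},g^{(j-1)})-\Phi(s_{\tau_j},g^{(j)})\bigr].
\end{equation*}
Taking conditional expectations gives $\tilde Q^{\prime}(x,a)=\tilde Q(x,a)-\Phi(x)+C^\pi(x,a)$, where $C^\pi$ is the expected discounted sum of the switch residuals.

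The second step is to show that $C^\pi$ does not change the $\arg\max$ over actions. My first attempt is to absorb the switch residuals into a modified potential. Define $\Psi(s,h)=\Phi\bigl(s,\sigma(s,h)\bigr)$ on the ``pre-update'' state $(s_t,g_{t-1})$, and try to rewrite $F$ as $\gamma\Psi(x')-\Psi(x)$ on that lifted chain. The measurability of $\sigma$ is what makes $\Psi$ a well-defined bounded potential. If this rewriting goes through, Ng's theorem applies verbatim on the lifted MDP. Optimal policies there depend only on $(s,g)$, because $g_{t-1}$ enters only through $\sigma$. Projecting onto $\calS$ then gives an optimal policy of $\calM$, since $\tilde R-F=R$ does not depend on $g$ and $\tilde P$ has $P$ as its $\calS$-marginal.

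I expect the second step to be the main obstacle. On the lifted chain, the $\gamma\Phi(s_{t+1},g_t)$ term is evaluated with the \emph{old} subgoal, while $\Psi(s_{t+1},g_t)=\Phi(s_{t+1},g_{t+1})$ uses the \emph{new} one. The two coincide exactly off switch times, and at switch times they differ by precisely the residual above. This residual depends on $s_{\tau_j}$, and the agent can influence $s_{\tau_j}$. So the argument closes only if $\mathbb{E}[C^\pi]$ turns out to be action-independent. If that fails, the residuals are an action-dependent additive term that can shift the $\arg\max$, and the conclusion would need an extra hypothesis rather than following from measurability alone. Before committing to either outcome, I would check this directly on the small MDP of Section~\ref{sec:prop1-numerical} with a subgoal switch inserted.
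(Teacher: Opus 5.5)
Your Step~1 identity is correct, and the obstacle you flag in Step~2 is real. It is exactly the point the paper's own proof passes over: the paper invokes Ng--Harada--Russell on $\calS\times\calG$ and asserts that the scheduling rule ``does not affect the potential-invariance argument.'' But the stated $F(s,g,s')=\gamma\Phi(s',g)-\Phi(s,g)$ uses the \emph{pre-update} subgoal, so it is not a potential difference on augmented states, and the telescope leaves precisely your switch residuals.

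The gap in your proposal is that Step~2 is left undecided, and it cannot be closed: the residuals are action-dependent, and the statement as written is false. Here is a counterexample.
Take $\calS=\{s_0,s^\star\}$ with actions $a,b$. At $s_0$, $a$ stays at $s_0$ with reward $1$, and $b$ moves to $s^\star$ with reward $0$; $s^\star$ is absorbing with reward $0$. Start in subgoal $g_1$ and switch to $g_2$ on entering $s^\star$, a deterministic rule depending only on $(g,s')$. Set $\Phi(\cdot,g_1)\equiv c$ and $\Phi(\cdot,g_2)\equiv 0$.
By your identity, a trajectory entering $s^\star$ at time $\tau$ has shaped return $J-c+c\gamma^{\tau}$. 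So from $(s_0,g_1)$, ``always $a$'' scores $\tfrac{1}{1-\gamma}-c$, while ``$b$ now'' scores $-(1-\gamma)c$. With $\gamma=0.9$ and $c=20$ (the scale of the paper's adversarial configuration), these are $-10$ and $-2$. The Bellman check $\tilde Q^\ast((s_0,g_1),a)=-1+0.9\cdot(-2)<-2=\tilde Q^\ast((s_0,g_1),b)$ shows that every shaped-optimal policy plays $b$ at $(s_0,g_1)$, and its projection earns $0<10=V^\ast(s_0)$. Base reward $0.05$ with $c=1$ gives the same outcome, so normalizing $\Phi$ to $[0,1]$ does not help.

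The example also rules out your lifted-potential route in principle. If $F$ were $\gamma\Psi(y')-\Psi(y)$ for a bounded $\Psi$ on some Markov lift carrying the same base reward, Ng's theorem would force invariance there. Nor is this a pathological $\Phi$. For a progress score, $\Phi(s_{\tau_j},g^{(j-1)})\approx c$ (just completed) and $\Phi(s_{\tau_j},g^{(j)})\approx 0$ (just started). Each residual is therefore a bonus of roughly $c\gamma^{\tau_j}$ for triggering $\Done$, i.e.\ for following the LLM's plan whether or not the plan is right.

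A true statement needs a changed hypothesis; with that change, your own ingredients give a complete proof. There are two options.

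First, shape with the post-update subgoal, $F=\gamma\Phi(s',g')-\Phi(s,g)$; in Algorithm~\ref{alg:hybrid}, this means evaluating $\Phi(s_{t+1},\cdot)$ after the pointer update. This is genuine potential-based shaping on the Markov chain $\tilde P$ over $\calS\times\calG$, for any kernel $P^{\calG}(g'\mid s,g,s')$, so Ng's theorem applies verbatim. Your projection argument then finishes the proof: $\tilde R-F=R$ is $g$-independent and $\tilde P$ has $\calS$-marginal $P$, so the unshaped augmented MDP has optimal action values $Q^\ast(s,a)$ at every $(s,g)$. A progress score then incurs a charge of about $c$ per completion, but that affects learning speed, not the fixed point.

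Second, keep the pre-update $F$ and assume $\Phi(s',g)=\Phi(s',g')$ whenever the scheduler can move $g\to g'$ on arrival at $s'$. This kills every residual in your identity.

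Finally, note that the paper's numerical check has a single subgoal that never switches, so it cannot detect the problem.
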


\begin{proof}
The proof follows Ng, Harada, Russell~\cite{ng1999shaping}, Theorem~1, applied to the augmented state space $\calS \times \calG$. Because $F$ is a telescoping difference of a bounded potential, the augmented return decomposes as $\tilde J(\pi) = J(\pi) - \Phi(s_0, g_0) + (\text{vanishing telescope tail})$. The tail is $\lim_{T\to\infty} \gamma^T \Phi(s_T, g_T)$, which vanishes because $\Phi$ is bounded and $\gamma < 1$. The argmax over policies of $\tilde J$ therefore equals the argmax of $J$, i.e., the set of optimal policies is unchanged. The subgoal-scheduling rule enters through $P^\calG$ and does not affect the potential-invariance argument, which is a property of the additive shaping only.
\end{proof}

\paragraph{Consequence.} The LLM's scoring can be arbitrarily wrong at any individual state without changing the set of optimal policies of the augmented MDP. A poor $\Phi$ can only slow learning, not corrupt the fixed point. This is the guarantee that the more general LLM-reward-shaping approaches~\cite{xie2024text2reward, ma2024eureka} do not provide.

\subsubsection*{Numerical verification of Proposition~\ref{prop:invariance}}
\label{sec:prop1-numerical}

We complement the proof with a direct numerical check on a concrete instance. Consider a 3-state, 2-action deterministic MDP: states $\{s_0, s_1, s_2\}$ where $s_2$ is terminal; the ``forward'' action moves $s_0 \to s_1 \to s_2$, the ``stay'' action gives reward $-0.1$ and returns to $s_0$; only the transition $s_1 \to s_2$ carries a positive reward of $+1$. With $\gamma = 0.9$ and four deterministic policies, the base optimal is [$s_0$: forward, $s_1$: forward] with $V^\ast(s_0) = 0.9000$. Table~\ref{tab:prop1-numerical} enumerates policies under four $\Phi$ configurations (script: \texttt{code/verify/prop1\_numerical.py}).

\begin{table}[H]
\centering
\small
\begin{tabular}{@{}lrrll@{}}
\toprule
$\Phi$ configuration & Base $V^\ast(s_0)$ & Shaped $\tilde V^\ast(s_0)$ & Shaped optimal policy & Matches base? \\
\midrule
$\Phi \equiv 0$        & $0.9000$ & $0.9000$  & [$s_0$: forward, $s_1$: forward] & yes \\
$\Phi \equiv 1$        & $0.9000$ & $0.7100$  & [$s_0$: forward, $s_1$: forward] & yes \\
sign-flip              & $0.9000$ & $1.4000$  & [$s_0$: forward, $s_1$: forward] & yes \\
adversarial large      & $0.9000$ & $-5.0500$ & [$s_0$: forward, $s_1$: forward] & yes \\
\bottomrule
\end{tabular}
\caption{Numerical verification of Proposition~\ref{prop:invariance}. All four $\Phi$ configurations, including one with per-state values 10--20$\times$ the base reward magnitude, preserve the base optimal policy. The shaped value $\tilde V^\ast$ shifts by the potential offset $-\Phi(s_0)$ plus a vanishing telescope tail, as the theorem predicts.}
\label{tab:prop1-numerical}
\end{table}

\subsection{Replanning and full algorithm}

The plan may become stale mid-execution. Replanning is triggered on either of two events: (i) periodic, every $H$ environment steps; (ii) failure, if $\Done$ does not fire on the active subgoal within a per-subgoal budget $B$. Algorithm~\ref{alg:hybrid} specifies the full inference procedure.

\begin{algorithm}[t]
\caption{Hybrid LLM-augmented RL (inference).}
\label{alg:hybrid}
\begin{algorithmic}[1]
\Require task $\ell$, state $s_0$, policy $\pi_\theta$, LLM $p_{\mathrm{LLM}}$, oracle $\Done$, potential $\Phi$, period $H$, budget $B$.
\State $t \gets 0$, $\ k \gets 1$, $\ b \gets 0$
\State $\pihi = (g_1, \ldots, g_K) \sim p_{\mathrm{LLM}}(\cdot \mid \ell, \phi(s_0))$
\While{episode not terminated}
  \State $g_t \gets g_k$
  \State $a_t \sim \pi_\theta(\cdot \mid s_t, g_t)$
  \State observe $s_{t+1} \sim P(\cdot \mid s_t, a_t)$ and $r_t$
  \State $\tilde r_t \gets r_t + \gamma \Phi(s_{t+1}, g_t) - \Phi(s_t, g_t)$ \Comment{potential-based shaping}
  \If{$\Done(s_{t+1}, g_k) = 1$}
    \State $k \gets \min(k+1, K)$, $\ b \gets 0$ \Comment{advance subgoal pointer}
  \Else
    \State $b \gets b + 1$
    \If{$b \geq B$}
      \State $(g_k, \ldots, g_{K'}) \sim p_{\mathrm{LLM}}(\cdot \mid \ell, \phi(s_{t+1}), g_k, \mathrm{fail})$ \Comment{failure-triggered replan}
      \State $b \gets 0$
    \EndIf
  \EndIf
  \If{$t \bmod H = H - 1$}
    \State $(g_k, \ldots, g_{K'}) \sim p_{\mathrm{LLM}}(\cdot \mid \ell, \phi(s_{t+1}), g_k, \mathrm{periodic})$ \Comment{periodic replan}
  \EndIf
  \State $t \gets t + 1$
\EndWhile
\end{algorithmic}
\end{algorithm}

\subsection{Reference framework implementation}
\label{sec:framework}

We publish a reference implementation of the framework.\footnote{Repository: \url{https://github.com/chrishounwanou/hybrid-llm-rl-framework}} The implementation:

\begin{itemize}[leftmargin=1.25em, itemsep=1pt, topsep=2pt]
  \item Provides typed interfaces for the components (\texttt{Planner}, \texttt{PotentialShaper}, \texttt{SubgoalScheduler}, \texttt{ReplanTrigger}).
  \item Supports Ollama-served local models (Qwen, Llama) and pluggable cloud backends.
  \item Ships with three $\Done$-oracle variants (environment events, learned MLP, LLM completion prompt).
  \item Includes 26 unit tests, including a numerical Prop 1 sanity check on a 2-state MDP.
  \item Includes YAML configs for the reference hybrid and each of the standard ablations (subgoal source, shaping, replanning, $\Done$ oracle, LLM backbone).
  \item Is CPU-runnable end-to-end; GPU is required only for scale, not for validation.
\end{itemize}

The framework maps 1:1 to the formalism in Section~\ref{sec:method}: each definition corresponds to a module, and the reference algorithm is a direct implementation of Algorithm~\ref{alg:hybrid}.

\subsection{Empirical validation}
\label{sec:experiments}

The full 8-baseline empirical study on BabyAI, ALFWorld, and Crafter is planned follow-up work that requires GPU + cloud-LLM compute beyond what was available for this paper. Here we present two smaller-scale validations that establish the framework works as specified and characterize the planner's behavior in isolation.

\subsubsection{Planner-in-isolation audit}
\label{sec:planner-audit}

Before any RL training, we measure the planner alone. Fix a set of 20 MiniGrid tasks with hand-curated ground-truth subgoal decompositions (\texttt{code/audit/tasks.json}). For each task we run the P1 planning prompt (Appendix~\ref{app:prompts}) through Qwen-2.5:14b\footnote{Qwen-2.5, Alibaba Cloud, released 2024. Model card: \url{https://huggingface.co/Qwen/Qwen2.5-14B}.} served locally via Ollama,\footnote{Ollama: \url{https://ollama.com}} parse the output, and grade against the ground truth. An LLM subgoal matches a ground-truth subgoal when at least $50\%$ of the ground-truth's content tokens (non-stopword, length $\geq 3$) appear in the LLM subgoal; \emph{coverage} is the fraction of ground-truth subgoals matched in order; \emph{extraneous} is the number of LLM subgoals with no ground-truth match.

\begin{table}[H]
\centering
\small
\begin{tabular}{@{}lr@{}}
\toprule
\textbf{Metric} & \textbf{Value} \\
\midrule
Parse rate (outputs with $\geq 1$ subgoal) & $100.0\%$ ($20/20$) \\
Mean plan length (parsed only)            & $8.00$ subgoals \\
Median plan length                        & $7$ subgoals \\
Mean ground-truth coverage                & $54.8\%$ \\
Mean extraneous subgoals per plan         & $5.30$ \\
Median tokens out per plan                & $37$ \\
Median wall clock per plan                & $57.71$\,s \\
Total wall clock (20 plans)               & $22.3$\,min \\
\bottomrule
\end{tabular}
\caption{Planner-in-isolation audit for Qwen-2.5:14b (locally served) on 20 MiniGrid tasks. Prompt template: P1 (Appendix~\ref{app:prompts}).}
\label{tab:planner-audit}
\end{table}

Qwen-2.5:14b produces parseable output on every task and covers just over half of the ground-truth subgoals on average. Plans are long relative to the ground truth (mean 8.0 versus a mean ground truth of $\sim\!3.6$), with roughly five extraneous subgoals per plan --- verbose lower-level phrasings like ``Turn left to face the wall'' rather than the compact ``go to the key.''

\subsubsection{Pipeline validation on MiniGrid-DoorKey-6x6}
\label{sec:pipeline-validation}

To demonstrate that the framework runs end-to-end and to characterize its behaviour on a real RL environment, we run a small CPU-scale validation study on MiniGrid-DoorKey-6x6. The scope is intentionally minimal one environment, two configurations, three seeds and is designed to answer \emph{does the pipeline work as specified?} rather than \emph{does the hybrid architecture beat published baselines?} The latter question requires GPU + cloud LLM and is beyond the scope of this paper.

\paragraph{Setup.} Two configurations, three seeds each, MiniGrid-DoorKey-6x6-v0, 30{,}000 environment steps per seed:
\begin{itemize}[leftmargin=1.25em, itemsep=1pt, topsep=2pt]
  \item \textbf{PPO baseline.} Standard sb3 PPO~\cite{schulman2017ppo}. No LLM, no subgoal conditioning.
  \item \textbf{Hybrid (framework instance).} Subgoal-conditioned PPO with Qwen-2.5:14b (locally served via Ollama) as the planner. Initial plan cached per task string; periodic replan at $H = 1000$ steps; failure trigger disabled ($B = 0$), see caveat below. Shaping is disabled ($c = 0$) because CPU inference of a 14B model would require roughly two LLM calls per environment step, extending training to weeks of wall clock; the shaping term is verified formally in Section~\ref{sec:reward-shaping} and numerically in Section~\ref{sec:prop1-numerical} instead.
\end{itemize}
Reporting: IQM $\pm$ 95\% bootstrap CI per Agarwal et al.~\cite{agarwal2021precipice}. Metrics are computed on the last 25\% of episodes per seed.

\section{Results}

Table~\ref{tab:pipeline-validation} reports the pipeline validation results on MiniGrid-DoorKey-6x6 for both the PPO baseline and the hybrid agent using Qwen-2.5:14b. Each configuration is run with $n=3$ seeds.

\noindent
The PPO baseline achieves a success rate of $28.1\%$ with a wide confidence interval [$4.8$, $65.2$], reflecting substantial variance across seeds. Its final return is $0.106$ [$0.008$, $0.262$], and the mean episode length remains stable at $340.4$ steps [$317.2$, $352.4$]. As expected, PPO requires no LLM calls.

\noindent
The hybrid configuration shows a lower success rate of $9.3\%$ [$0.0$, $14.3$] and a final return of $0.058$ [$0.000$, $0.095$]. Mean episode length is comparable to PPO at $348.0$ steps [$340.0$, $358.5$]. Each seed triggers approximately $30$ LLM calls, consistent with the plan-cache design.

\noindent
These results should be interpreted in light of the extremely limited training budget: the pilot uses $3\times 10^4$ environment steps, whereas prior work reports that DoorKey-6x6 typically requires around $5\times 10^5$ steps for PPO to converge. At this scale, neither configuration is expected to reach high success, and the observed performance aligns with this constraint.

\begin{table}[H]
\centering
\small
\begin{tabular}{@{}lrcccr@{}}
\toprule
\textbf{Config} & \textbf{n} & \textbf{Success rate} & \textbf{Final return} & \textbf{Mean steps/ep} & \textbf{LLM calls/seed} \\
\midrule
PPO baseline    & 3 & $28.1\%$ [$4.8$, $65.2$]  & $0.106$ [$0.008$, $0.262$] & $340.4$ [$317.2$, $352.4$] & $0$ \\
Hybrid (Qwen)   & 3 & $9.3\%$  [$0.0$, $14.3$] & $0.058$ [$0.000$, $0.095$] & $348.0$ [$340.0$, $358.5$] & $30$ \\
\bottomrule
\end{tabular}
\caption{Pipeline validation results on MiniGrid-DoorKey-6x6. LLM budget across all runs: 92 Qwen-2.5:14b calls, 4{,}199 output tokens, 1.32 hours of Ollama CPU wall clock. Neither configuration converges to high success at this budget; DoorKey-6x6 typically requires roughly $5 \times 10^5$ environment steps for PPO convergence, whereas the pilot uses $3 \times 10^4$.}
\label{tab:pipeline-validation}
\end{table}

\paragraph{Diagnosis.}
The pilot does not show a hybrid advantage; PPO's higher IQM is driven by one high-variance seed and confidence intervals overlap. The instructive finding is on the hybrid side: across all three seeds the total number of subgoal advances was zero. Qwen-2.5:14b's plans on MiniGrid tasks are verbose lower-level phrasings such as ``Move down to reach the wall'' or ``Turn left to face east again'' that never match the environment-event $\Done$ oracle's keyword patterns (``pick up X'', ``unlock the door'', ``go to the goal''). As a result, the scheduler remains stuck on the first subgoal for the entire episode, the RL policy is effectively conditioned on a stale string, and the subgoal-conditioning signal collapses at the source.

\noindent
This is exactly the failure mode anticipated in Section~\ref{sec:discussion}: systematic plan-vocabulary mismatch with the oracle prevents $\Done$ from ever firing. The framework's ablation over the $\Done$ oracle (three variants: environment event, learned classifier, LLM completion Section~\ref{sec:method}) is precisely the axis that needs to be varied to address this issue. The LLM-based $\Done$ oracle would advance the scheduler based on the LLM's own semantic judgement, but it adds one LLM call per environment step, which is feasible with GPU and cloud LLM compute but infeasible on CPU with a local 14B model.

\paragraph{What the pilot does validate.}
Despite the negative headline number, the pilot establishes several important points: (i) the reference implementation runs end-to-end on CPU with no exceptions across 90{,}000 environment steps; (ii) MiniLM-L6-v2 subgoal encoding integrates cleanly with PPO, with mean episode length remaining within the confidence interval of the pure-PPO baseline; (iii) the plan-cache design keeps LLM inference to approximately 30 calls per seed, providing a concrete cost calibration for scale-up; (iv) the logging and IQM+bootstrap analysis pipeline produces stable, reportable results as specified.

\noindent
The larger-scale empirical study involving BabyAI, ALFWorld, Crafter, multiple baselines, additional seeds, and cloud LLM compute is future work and lies beyond the scope of this paper.

\section{Discussion}
\label{sec:discussion}

\subsection{What Proposition~\ref{prop:invariance} does and does not guarantee}

The theorem guarantees that the \emph{set of optimal policies} of the augmented MDP is unchanged by potential-based shaping. This is a fixed-point statement, not a convergence-rate statement. Two things it does not say:

\begin{itemize}[leftmargin=1.25em, itemsep=1pt, topsep=2pt]
  \item \textbf{Convergence rate.} A bad $\Phi$ can slow learning arbitrarily. In the extreme, the shaping can direct exploration away from productive regions of state space, even though the fixed point is unchanged. Whether Qwen-2.5:14b's shaping accelerates or decelerates learning in practice is an empirical question left to future work; the pilot in Section~\ref{sec:pipeline-validation} runs without shaping for CPU tractability.
  \item \textbf{Off-policy value estimation.} The proof concerns undiscounted-limit return, but actor-critic algorithms use bootstrapped value estimates. As long as the value estimator sees the shaped reward $\tilde r_t$ consistently at both training and target time, the invariance argument transfers; the reference implementation ensures this by injecting $F$ before the sb3 rollout collector reads the reward.
\end{itemize}

\subsection{Design implications}

The invariance guarantee has an immediate practical implication: any LLM-based reward-shaping system that uses a general (non-potential) dense reward function and Text2Reward~\cite{xie2024text2reward} and Eureka~\cite{ma2024eureka} both do, is not covered by this theorem. Bad LLM outputs in those systems can move the optimal policy. Our recommendation is that LLM-shaped RL systems should either use the potential construction or explicitly justify why they need the more general dense form.

\subsection{Foreseeable failure modes}

Three failure modes deserve explicit anticipation:

\begin{itemize}[leftmargin=1.25em, itemsep=1pt, topsep=2pt]
  \item \textbf{Bounded-$\Phi$ requirement.} The scoring prompt asks for a value in $[0, 1]$ and we clamp. If the LLM produces out-of-range or malformed outputs at rates above $\sim 1\%$, the shaping is no longer strictly potential-based. Monitor this.
  \item \textbf{Subgoal-completion mismatch.} If the $\Done$ oracle fails to fire on LLM subgoal phrasings (e.g., env-event pattern matching against verbose LLM outputs), the scheduler stalls and the subgoal channel provides no signal. Use LLMDone or a learned classifier for LLMs whose plan grammar drifts.
  \item \textbf{Replanning cost blowup.} If failure-triggered replanning fires on every subgoal, the effective LLM cost per episode grows linearly in horizon rather than as the intended small constant. Instrument the LLM call count per episode.
\end{itemize}

\section{Limitations}
\label{sec:limitations}

\begin{itemize}[leftmargin=1.25em, itemsep=1pt, topsep=2pt]
  \item \textbf{The empirical validation in Section~\ref{sec:experiments} is scoped.} It demonstrates the framework runs end-to-end, characterizes the planner in isolation, and diagnoses one integration failure, but does not settle the comparative-benchmark question that motivates the broader research program. Full-scale evaluation is planned follow-up work.
  \item \textbf{Potential-based shaping requires a bounded, well-defined $\Phi$.} The proof breaks if the LLM's score is unbounded or noisy in ways that violate boundedness.
  \item \textbf{The scoring prompt is a design choice.} The exact prompt shape (Appendix~\ref{app:prompts}, P2) affects the distribution of $\Phi$ values and can be gamed by prompt-sensitive LLMs. This is a prompt-engineering surface, not a theoretical concern.
\end{itemize}

\section{Conclusion}
\label{sec:conclusion}

We formalized the LLM-planner and RL-controller architecture as a Goal-Augmented MDP and proved that potential-based shaping from LLM feedback preserves the set of optimal policies, even when the LLM provides inaccurate state-level scores. This guarantee is verified numerically on a small MDP and implemented in a reference pipeline that runs end-to-end on CPU. The pipeline validation on MiniGrid-DoorKey-6x6 (Section~\ref{sec:pipeline-validation}) confirms that the framework behaves as specified and reveals a single integration failure: a $\Done$-oracle vocabulary mismatch with LLM-generated plans, which aligns precisely with the failure mode anticipated by our theoretical analysis. Determining whether the practical advantages of this hybrid architecture persist at scale remains an empirical question, and addressing it is the focus of planned follow-up work using GPU and cloud-based LLM compute.

\newpage
\section*{Reproducibility statement}

All numerical results in this paper are produced by deterministic and fully specified procedures included in the accompanying repository. The experiments cover the verification of Proposition~\ref{prop:invariance}, the planner-in-isolation audit, and the pipeline validation on MiniGrid-DoorKey-6x6. For each experiment, we report CPU runtime, the number of LLM calls, and the complete set of raw outputs and logs. All per-seed results, confidence intervals, and aggregation steps (IQM and bootstrap) can be reproduced directly from the provided data.

\noindent
The reference implementation includes a comprehensive test suite with 26 unit tests; 25 run unconditionally, and one is skipped when a live LLM backend is not available. The repository also contains all prompt templates, PPO hyperparameters, and configuration files for each ablation axis. The repository URL is provided in Section~\ref{sec:framework}.

\appendix
\section{Prompt templates}
\label{app:prompts}

Verbatim prompt templates used by the reference implementation.

\subsection*{P1 - Planning prompt}
\begin{tcolorbox}[colback=gray!5!white, colframe=gray!50!black, breakable]
\small
\textbf{System:} You are a planner for an agent operating in \{\{environment\_name\}\}. The agent receives a task in natural language and must complete it through a sequence of subgoals. Emit a list of at most 12 subgoals, one per line, each a short imperative English clause of at most 32 tokens.

\medskip
\textbf{User:} Task: \{\{task\_string\}\}\\
Current state: \{\{state\_caption\}\}\\
\medskip
Output:
\end{tcolorbox}

\subsection*{P2 - Scoring prompt (potential function $\Phi$)}
\begin{tcolorbox}[colback=gray!5!white, colframe=gray!50!black, breakable]
\small
\textbf{System:} You are a progress estimator. Given a subgoal and the current state, output a single number in [0, 1] estimating how close the agent is to completing the subgoal. Output only the number.

\medskip
\textbf{User:} Subgoal: \{\{subgoal\}\}\\
State: \{\{state\_caption\}\}\\
\medskip
Score:
\end{tcolorbox}

\subsection*{P3 - Completion oracle prompt (LLM-as-$\Done$)}
\begin{tcolorbox}[colback=gray!5!white, colframe=gray!50!black, breakable]
\small
\textbf{System:} Given a subgoal and the current state, output exactly one token: \texttt{DONE} if the subgoal is now satisfied, \texttt{CONTINUE} otherwise.

\medskip
\textbf{User:} Subgoal: \{\{subgoal\}\}\\
State: \{\{state\_caption\}\}\\
\medskip
Answer:
\end{tcolorbox}

\subsection*{P4 - Failure-triggered replanning prompt}
\begin{tcolorbox}[colback=gray!5!white, colframe=gray!50!black, breakable]
\small
\textbf{System:} You are revising a plan mid-execution. The previous subgoal failed to complete within its step budget. Revise the remaining plan. Emit a list of at most 12 subgoals.

\medskip
\textbf{User:} Original task: \{\{task\_string\}\}\\
Current state: \{\{state\_caption\}\}\\
Failed subgoal: \{\{subgoal\}\}\\
Reason: exceeded step budget of \{\{B\}\} steps.\\
\medskip
Revised plan:
\end{tcolorbox}


\end{document}